\documentclass{article}

\PassOptionsToPackage{authoryear, round}{natbib}

\usepackage[main, final]{neurips_2026}

\usepackage[utf8]{inputenc} 
\usepackage[T1]{fontenc}    
\usepackage{hyperref}       
\usepackage{url}            
\usepackage{booktabs}       
\usepackage{amsfonts}       
\usepackage{nicefrac}       
\usepackage{microtype}      
\usepackage{xcolor}         

\usepackage{amssymb}            
\usepackage{mathtools}          
\usepackage{mathrsfs}           
\usepackage{graphicx}           
\usepackage{subcaption}         
\usepackage[space]{grffile}     
\usepackage{url}                
\usepackage{lipsum}             
\usepackage{float}  
\usepackage{multirow}

\usepackage{amsthm}

\theoremstyle{plain}
\newtheorem{theorem}{Theorem}[section]
\newtheorem*{theorem*}{Theorem}

\theoremstyle{definition}

\newtheorem{assumption}[theorem]{Assumption}

\theoremstyle{remark}
\newtheorem{remark}[theorem]{Remark}

\usepackage{caption} 

\title{
On Distributional Reinforcement Learning \\in Chaotic Dynamical Systems 
}

\author{%
  James Rudd-Jones \\
  Centre for Artificial Intelligence\\
  Department of Computer Science\\
  University College London\\
  London, UK \\
  \texttt{james.rudd-jones.22@ucl.ac.uk} \\
  \And
  Mirco Musolesi \\
  Centre for Artificial Intelligence\\
  Department of Computer Science\\
  University College London\\
  London, UK \\
  Department of Computer Science and\\
  Engineering, University of Bologna\\
  Bologna, Italy \\
  \texttt{m.musolesi@ucl.ac.uk} \\
  \AND
  María Pérez-Ortiz \\
  Centre for Artificial Intelligence\\
  Department of Computer Science\\
  University College London\\
  London, UK \\
  \texttt{maria.perez@ucl.ac.uk} \\
}

\begin{document}

\maketitle

\begin{abstract}
Chaotic dynamical systems pose a fundamental challenge for Reinforcement Learning (RL): exponential sensitivity to initial conditions induces high-variance bootstrap targets and poorly conditioned gradient updates. 
Chaotic dynamics arise across scientific and engineering domains, from fluid flows and climate systems to multi-agent systems, where reliable learning is highly desirable.
Standard RL methods optimise expected returns through scalar value functions, implicitly averaging over diverging trajectories and entangling trajectory level instability with the learning objective.
We show that under mild statistical stability assumptions, the return distribution evolves more regularly than individual trajectories when measured under the $1$-Wasserstein metric, yielding a smoother distributional Bellman objective.
By aligning optimisation with this measure level structure, distributional RL provides better conditioned learning.
We offer a principled explanation for the advantages of distributional methods in chaotic systems and the geometries of RL objectives under chaos.
\end{abstract}

\section{Introduction}
In many applications of Reinforcement Learning (RL), chaos as itself is oft ignored and is instead conflated with stochasticity, but this framing obscures some of its important structure.
In fact, unlike stochastic noise, chaos is typically deterministic in nature, arising from well-defined underlying dynamical rules, yet unpredictable because of amplification of infinitesimal perturbations over time \citep{strogatz2024nonlinear}.
The compounded errors lead to multiple qualitatively distinct long term outcomes and sets of possible system states that form intricate self-similar geometric structures rather than smooth landscapes \citep{mandelbrot2004fractals}.
As a result, averaging returns, a standard assumption in RL, can be fundamentally misleading.
The intersection of RL and chaotic dynamics is broad and increasingly prominent, spanning multi-agent systems \citep{sato2002chaos, rudd2026rlc}, the stabilisation of financial systems \citep{rigatos2019nonlinear}, turbulence reduction in aero- and hydrofoil flows \citep{montala2025deep}, climate policy design \citep{rudd2025crafting, rudd2025multi}, and synchronisation of power grids \citep{eroglu2017synchronisation, halekotte2021transient}.
However, the characteristics of chaotic systems render many traditional RL assumptions invalid, including smoothness in value landscapes and stable gradient updates \citep{boccaletti2000control, wang_fractal_2023, wang_mollification_2024}.

We start with a vignette for the reader: consider two states separated by a singular floating point numerical precision unit in a chaotic system. 
Under identical open-loop policies, one trajectory reaches a goal with high reward while the other diverges into a failure basin. 
Standard RL collapses these outcomes into a single scalar value, producing a misleading target that corresponds to no realisable behaviour.
In chaotic systems, infinitesimal errors in state estimation or action selection amplify exponentially over time, even tiny differences in state can lead to vastly different future trajectories which undermines reliable long-term prediction and control. 
This ``butterfly effect" \citep{lorenz1972predictability} creates a hostile optimisation landscape as we lack smoothness in state for value function representation and can receive a large variance of returns when trying to directly optimise policies. 
Traditional methods typically struggle because the bounds of value prediction error explode rapidly, rendering long-horizon planning inextricably noisy \citep{young2024enhancing}.

We study the mismatch between chaotic dynamics and the optimisation objective used in standard RL, which typically maximises expected return via scalar value functions. 
In chaotic systems, due to the exponential divergence of trajectories, long-horizon returns become highly sensitive to initial conditions. 
Averaging over such trajectories produces irregular bootstrapped targets, leading to sharp curvature and severe gradient scaling issues in the optimisation landscape.
Although individual trajectories diverge, the evolution of probability measures describing long-term system behaviour is often statistically regular \citep{ruelle1976measure,bowen2008ergodic}. 
Motivated by this observation, we study RL through the lens of return distributions and Wasserstein geometry: under mild statistical stability assumptions, the return distribution varies more regularly than trajectory level quantities, yielding a smoother optimisation objective and providing a principled explanation for the potential effectiveness of distributional RL in chaotic environments.
Our contributions can be summarised as follows:
\begin{itemize}
    \item We prove that for chaotic dynamics the distributional RL objective is smoother than the expectation based objective: under mild statistical stability assumptions the return distribution is Lipschitz continuous in the $1$-Wasserstein metric even when trajectories diverge exponentially.
    \item We empirically analyse optimisation geometry in chaotic environments and show that distributional objectives produce smoother loss landscapes and lower variance one-step targets, in turn enforcing bounded gradient norms.
    \item We show that distributional Q-learning methods can improve converged episodic return for some canonical control in chaos experiments over a non-distributional approach.
\end{itemize}

\section{Related Work}

\textbf{Model-Free Reinforcement Learning for Chaotic Systems.}
The optimisation landscape in chaotic environments is notoriously difficult: \citet{wang_fractal_2023} demonstrated that the value functions and policy optimisation landscape in chaotic systems is non-smooth, rendering standard gradient based methods unstable or inefficient.  
This behaviour stems from the non-smoothness in state caused by sensitivity to initial conditions and exponential divergence of trajectories.
To address these irregularities, \citet{wang_mollification_2024} analysed the mollification effects of stochastic policies, showing that the combination of a fractal value function with the Gaussian policy kernel effectively smooths out the high frequencies. 
This theoretical insight explains why policy gradient methods can succeed in chaotic domains as the policy acts as an inherent low-pass filter.
However, in some environments this smoothing impacts the global objective, causing non-distributional RL to fail \citep{wang_mollification_2024}.

\textbf{Distributional Reinforcement Learning for Chaotic Systems.}
A canonical example of distributional RL to chaotic control is the autonomous navigation of stratospheric balloons studied by \citet{bellemare2020autonomous}.
In this setting, the atmospheric wind field forms a chaotic dynamical system and horizontal control is achieved solely by adjusting altitude to exploit different wind currents. 
The authors employed QRDQN \citep{dabney2018distributional}, a distributional RL algorithm that exceeded non-distributional approaches, attributed in part to the multimodal return distribution induced by chaotic winds. 
Our work can be seen as providing theoretical justification for the empirical success of distributional RL observed by \citet{bellemare2020autonomous} for control under chaos.


\textbf{Smoothness in Distributional Reinforcement Learning.}
While standard RL optimises the expected return, distributional RL models the full return distribution \citep{bellemare_distributional_2017}. 
Beyond improved representational capacity, prior work shows that distributional methods can yield more favourable optimisation properties than expectation based approaches \citep{sun2022does}. 
In particular, the distributional Bellman operator is Lipschitz continuous under metrics such as the Wasserstein distance \citep{bellemare_distributional_2017} and the Cram\'er distance \citep{rowland2018analysis}, even when the expected Bellman operator may fail to contract. 
These results suggest that distributional RL implicitly smooths the optimisation landscape by aggregating information across the return distribution. 
Such smoothing is reminiscent of the mollification observed in policy gradient methods and may help explain the empirical robustness of distributional RL in high variance settings such as chaotic environments. 
However, to the best of our knowledge, the interaction between distributional RL and chaotic dynamics has not previously been analysed.

\section{Chaotic Dynamics and their Implications for Reinforcement Learning}
\vspace{-10pt}

\begin{figure}[htbp] 
    \captionsetup{skip=-5pt}
    \centering
        \includegraphics[width=1\linewidth]{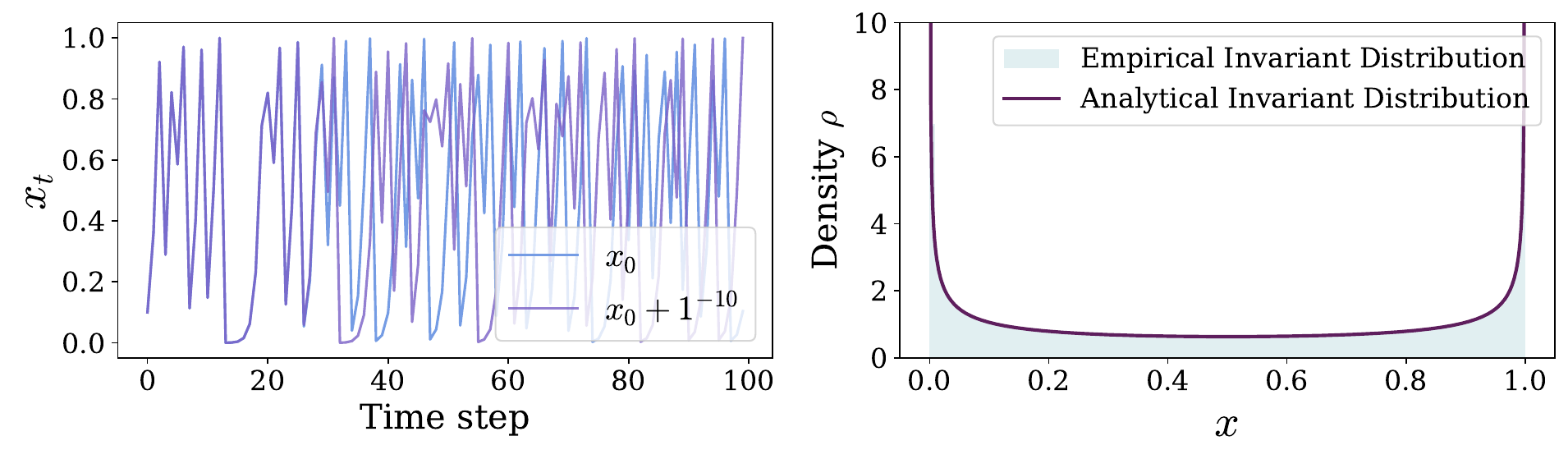}
    \caption{Left: Two trajectory realisations. Right: Empirical and analytical invariant distributions.}
    \label{fig:invariant_figure_together}
    \vspace{-5pt}
\end{figure}


In the section we provide an overview of our study, introducing chaotic dynamics and their impact on Reinforcement Learning (RL). 
We discuss how chaos is typically approached from a probabilistic perspective, drawing parallels with learning the full distribution of returns in distributional RL.
We posit that distributional RL mitigates several failure modes that arise in chaotic environments, where expectation-based value learning often struggles. 
However, we observe that distributional RL may not be sufficient on its own to tackle the full range of challenges considered in all scenarios.

\subsection{What Does Chaos Mean Mathematically?}
Non-linear dynamics frequently exhibit chaotic behaviour under certain parameters; chaos is a ubiquitous phenomenon across many natural processes \citep{strogatz2024nonlinear}.
Formally, a (continuous-time) dynamical system is defined on a phase space $M$, a smooth manifold where every possible state of the system corresponds to a unique point $s \in M$.
The system evolves according to a flow $\phi_t : M \rightarrow M$ such that $s(t) = \phi_t(s_0)$ describes the trajectory from initial condition $s_0$ given time $t$.
Chaos is characterised by an extreme sensitivity to initial conditions \citep{strogatz2024nonlinear}.
For two infinitesimally close initial states $s_0$ and $s_0 + \delta s_0$ the distance between their trajectories at time $t$ separates exponentially as $\| s'(t) - s(t) \| \approx e^{\lambda t} \|\delta s_0\|$, where $s(t)$ is the trajectory starting at $s_0$ and $s'(t)$ the trajectory starting at $s_0 + \delta s_0$, quantified by the Lyapunov exponent $\lambda$ \citep{wolf1985determining, shimada1979numerical}.
With maximal Lyapunov exponent $\lambda_{\max} = \lim_{t \rightarrow \infty} \lim_{\| \delta s_0 \| \rightarrow 0} \frac{1}{t} \ln \frac{\| \delta s(t) \|}{\| \delta s_0 \|}$.
Systems with $\lambda_{\max} > 0$ exhibit exponential error amplification, are typically regarded chaotic, imply that long term prediction becomes exponentially sensitive to initial conditions despite deterministic governing equations.
The reciprocal of the maximal Lyapunov exponent, known as Lyapunov time $T_\lambda \approx 1/\lambda_{\max}$, represents the horizon beyond which state predictions become effectively meaningless due to the exponential amplification of infinitesimal errors.

\subsection{Why Distributions are More Stable than Trajectories Under Chaotic Dynamics}
Dissipative dynamical systems typically evolve toward an attractor, an invariant subset of the phase space \citep{strogatz2024nonlinear}.
While simple systems admit fixed points or limit cycles, chaotic systems often possess strange attractors, which exhibit fractal geometry and thus a non-integer Hausdorff dimension \citep{ruelle1976measure}.  
Interestingly, although trajectories diverge exponentially, the strange attractor is a bounded set in phase space that confines trajectories, within which they exhibit chaotic behaviour.

Strange attractors admit invariant probability measures, often Sinai-Ruelle-Bowen (SRB) measures, which characterise the system's long term statistical behaviour \citep{sinai1972gibbs, ruelle1976measure, bowen2008ergodic}. 
Modelling this invariant distribution is substantially more robust than predicting individual trajectories as beyond Lyapunov time arbitrarily small errors are exponentially amplified, making long-horizon accuracy impossible \citep{jiang_training_2023, schiff_dyslim_2024}. 
In contrast, learning the geometry and statistics of the attractor captures physically valid long term behaviour.

The Logistic Map, a canonical example of discrete time chaos illustrates this distinction, defined as $s_{t+1} = ms_t(1-s_t)$ where $m$ adjusts the mapping's chaotic nature.
Infinitesimally close initial conditions quickly decorrelate (Figure \ref{fig:invariant_figure_together} Left), yet the empirical distribution of visited states converges to a stable invariant density (Figure \ref{fig:invariant_figure_together} Right). 
Thus, although trajectories diverge exponentially and are individually unpredictable, the map admits a stable invariant measure, so that long-run time averages and distributional properties converge.


\subsection{Challenges of Chaotic Transitions in Reinforcement Learning}
Recent findings by \citet{wang_fractal_2023} demonstrate that in chaotic environments, value functions and consequently the policy gradient landscape exhibit fractal structure.
In chaotic systems, because the return is defined through long-horizon rollouts, the state-action value function $Q(s,a)$ becomes highly sensitive to small changes in initial conditions and may be non-smooth with respect to state.

Such non-smoothness directly impacts gradient based optimisation. 
Temporal difference (TD) methods rely on bootstrapped targets of the form $r + \gamma \max_{a'} Q(s',a')$, irregularities in $Q$ propagate through the Bellman recursion. 
Small state differences produce large target variations, increasing gradient variance and inducing sharp curvature in the loss landscape. 
In these regimes, local gradient information provides limited guidance for global progress, optimisation is highly sensitive to step size and initial conditions, and convergence may stall or oscillate near the optimum.
These issues arise from properties of the underlying dynamical system.

Distributional RL \citep{bellemare_distributional_2017} reflects this perspective by modelling the full return distribution rather than its expectation. 
The distributional Bellman operator is a contraction in the $1$-Wasserstein metric, providing a well defined fixed point in a probability space \citep{bellemare_distributional_2017}.
By optimising distances between return distributions, rather than scalar value errors, distributional methods shift the objective to a metric space with smoother geometry, potentially yielding a better conditioned optimisation landscape.
In this sense, treating chaotic transitions probabilistically principally aligns the learning objective with the intrinsic instability of the dynamics.

\section{How is Distributional Reinforcement Learning Beneficial for Chaotic Dynamics?}



We model the environment as a time-homogeneous Markov Decision Process (MDP) defined by the tuple $\left < \mathcal S, \mathcal A, P, R, \mu_0, \gamma \right >$ where $\mathcal S$ and $\mathcal A$ denote the continuous state and finite action spaces, $P : \mathcal S \times \mathcal A \rightarrow \Delta (\mathcal S)$ the transition dynamics, $R : \mathcal S \times \mathcal A \rightarrow \mathbb R$ the reward function, $\mu_0$ the initial state distribution, and $\gamma \in (0,1)$ the discount factor.
At each timestep $t$ an agent selects action $a_t \sim \pi(\cdot \mid s_t)$, the MDP transitions to the next state $s_{t+1} \in \mathcal S$ with probability $P(s_{t+1}|s_t, a_t)$ and in doing so receives reward $R(s_t, a_t)$. 
Actions are selected by the policy $\pi : \mathcal S \rightarrow \Delta(\mathcal A)$.
We maximise the discounted return  $Z^\pi=\sum_{t=0}^\infty \gamma^t R(s_t, a_t)$.
The state value and state-action value functions:
\begin{align}
    V^\pi(s) &= \mathbb E_{a \sim \pi,\, s' \sim P} \left[ R(s,a) + \gamma V^\pi(s') \right], \quad \quad \; \; \; \; \;V^\pi(s) = \mathbb E_\pi \left[ Z^\pi \mid s_0 = s \right], \\
    Q^\pi(s,a) &= \mathbb E_{s' \sim P,\, a' \sim \pi} \left[ R(s,a) + \gamma Q^\pi(s',a') \right], \quad Q^\pi(s,a) = \mathbb E_\pi \left[ Z^\pi \mid s_0 = s, a_0 = a \right].
    \label{eqn:value_funcs}
\end{align}
Non-distributional RL estimates expectations of returns; in contrast, Distributional RL models the full return distribution. 
Let $Z(s,a)$ denote the random return obtained from state action pair $(s,a)$. 
The distributional Bellman equation is $Z(s,a) \overset{D}{=} R(s,a) + \gamma Z(S', A')$, where $S' \sim P(\cdot \mid s,a)$, $A' \sim \pi(\cdot \mid S')$, and $\overset{D}{=}$ denotes distribution equality.
Distributional RL minimises distances between return distributions under metrics such as Wasserstein, yielding contraction properties of the distributional Bellman operator, ensuring stable convergence \citep{bellemare_distributional_2017}.

\subsection{A Distributional Objective with Wasserstein can be Smoother under Chaos}
We argue that in chaotic dynamical systems, expectation based objectives lead to challenging gradient updates and highly irregular value functions \citep{wang_fractal_2023}. 
In contrast, a distributional objective using the Wasserstein metric yields smoother learning targets.
To clarify, \textit{we do not claim that the true return distribution is smooth in state}. 
Instability in expectation based methods occurs at the level of individual trajectories, even though statistical properties of chaotic systems can remain stable under small perturbations. 
This distinction motivates distributional RL, comparing return distributions under Wasserstein induces metric regularity absent from expectation based approaches. 
Consequently, although the mapping $s \mapsto Z^\pi(s,a)$ may be highly irregular at individual points, it can still be Lipschitz continuous as a mapping into the space of probability measures endowed with $W_1$, provided the transition dynamics are statistically stable.
We formalise this result under the assumptions below: 

\begin{assumption}[One-Step Lipschitz Dynamics]
\label{assumption:pathwise_divergence}
    Let $\pi$ be a Lipschitz continuous policy. 
    We assume the closed-loop one-step dynamics $s_{t+1} = f(s_t,\pi(s_t))$ are $K_f$-Lipschitz continuous:
    \begin{equation}
        \|f(s_1, \pi(s_1))-f(s_2, \pi(s_2))\| \leq K_f \|s_1-s_2\|,
    \end{equation}
    with $K_f>1$ in chaotic regimes.
    This assumption quantifies how perturbations in the state propagate through the dynamics.  
    In chaotic regimes ($K_f > 1$), small differences in initial states lead to exponential trajectory divergence.
    Note that traditional Q-learning relies upon a greedy policy: a discontinuous step function and thus not Lipschitz continuous.
    By assuming Lipschitz continuity (e.g., Softmax policy or Gaussian policy as in PPO) we establish a lower bound on the instability.
\end{assumption}
\vspace{-5pt}
\begin{assumption}[Reward Regularity]
\label{assumption:reward_regularity}
For all $a \in \mathcal A$ the reward function is $K_R$-Lipschitz continuous in state and rewards are bounded:
    \begin{equation}
        |R(s_1,a) - R(s_2, a)| \leq K_R \|s_1 - s_2\|.
    \end{equation}
    This assumption ensures that small perturbations in the state lead to controlled changes in reward.  
    Such smoothness is typical in problems where rewards correspond to continuous cost functions.
\end{assumption}
\vspace{-5pt}
\begin{assumption}[Wasserstein Statistical Stability of One-Step Transitions]
\label{assumption:statistical_stability}
    For each $a \in \mathcal A$ the transition distribution $P(\cdot | s,a)$ is $K_P$-Lipschitz continuous with respect to the Wasserstein-1 metric $W_1$. 
    That is for any $a$:
    \begin{equation}
        W_1 \left(P(\cdot | s_1,a),P(\cdot | s_2,a) \right) \leq K_P \| s_1 - s_2 \|.
    \end{equation}
\end{assumption}
\vspace{-5pt}
This assumption captures statistical regularity of the dynamics: although nearby trajectories may diverge, the resulting state distributions remain close in Wasserstein distance.  
While purely deterministic and perfectly observed chaotic systems may violate Assumption \ref{assumption:statistical_stability}, most pragmatic RL problems introduce effective stochasticity through process noise, exploration noise, partial observability, state discretisation, sensor uncertainty, etc. 
Under these conditions the transition kernel becomes non-degenerate and varies smoothly in Wasserstein distance. 
Fully deterministic chaotic systems represent a special limiting case, whereas statistically stability is the norm in practical control \citep{viana1997stochastic}.
\begin{assumption}[Discount Condition for Distributional Stability]
We assume that: 
 \label{assumption:discounting}
    \begin{equation}
        \gamma K_P < 1
    \end{equation}
\end{assumption}
\vspace{-5pt}
This condition ensures that the Bellman operator is contractive under distributional perturbations.  
Intuitively, discounting prevents transition sensitivity from amplifying errors over long horizons.
\begin{theorem}[One-Step Sensitivity of Scalar Returns]
\label{theorem:exponential_onestep}
    Under Assumptions \ref{assumption:pathwise_divergence} and \ref{assumption:reward_regularity}, for any closed-loop policy $\pi$, the truncated return map $s \mapsto G_T(s)$ is Lipschitz ($\mathrm{Lip}(G_T)$) with constant:
    \vspace{-6pt}
    \begin{equation}
        \mathrm{Lip}(G_T)\;\le\;K_R\sum_{t=0}^{T-1}(\gamma K_f)^t
        \;=\;
        K_R\cdot\frac{(\gamma K_f)^T-1}{\gamma K_f-1}\quad (\gamma K_f\neq 1).
    \end{equation}
    In particular, if $\gamma K_f>1$, then $\mathrm{Lip}(G_T)$ grows exponentially in $T$.
    Proof in Appendix \ref{sec:appendix1}.
\end{theorem}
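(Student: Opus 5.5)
We work with the closed-loop deterministic rollout. Write $f_\pi(s) := f(s,\pi(s))$, let $s_t = f_\pi^{t}(s)$ with $s_0 = s$, and define the truncated return
\[
G_T(s) = \sum_{t=0}^{T-1}\gamma^t R(s_t,\pi(s_t)).
\]
The proof is a telescoping triangle-inequality argument in two steps. First we bound how far two trajectories separate at each time $t$. Then we sum the per-step reward differences, weighted by $\gamma^t$.

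\textbf{Step 1 (trajectory divergence).} Take two initial states $s,\tilde s$ with rollouts $s_t$ and $\tilde s_t$. We show by induction on $t$, using Assumption~\ref{assumption:pathwise_divergence} at each step, that
\[
\|s_t-\tilde s_t\|\le K_f^t\|s-\tilde s\|.
\]
The base case $t=0$ is trivial. For the inductive step,
\[
\|s_{t+1}-\tilde s_{t+1}\| = \|f_\pi(s_t)-f_\pi(\tilde s_t)\| \le K_f\|s_t-\tilde s_t\|.
\]

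\textbf{Step 2 (per-step reward bound).} We want
\[
|R(s_t,\pi(s_t))-R(\tilde s_t,\pi(\tilde s_t))|\le K_R\|s_t-\tilde s_t\|.
\]
This is the one delicate point, and I expect it to be the main obstacle. Assumption~\ref{assumption:reward_regularity} gives Lipschitz continuity in the state only for a \emph{fixed} action, while the two rollouts may select different actions.

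\begin{itemize}
\item \emph{Preferred route.} Absorb the policy into the reward, i.e.\ interpret $K_R$ as the Lipschitz constant of the closed-loop reward $s\mapsto R(s,\pi(s))$. This parallels how Assumption~\ref{assumption:pathwise_divergence} is stated for the closed-loop map. It is the reading under which the stated constant $K_R$ is exact.
\item \emph{Stochastic policies.} If $\pi$ is stochastic, take the expected reward $\sum_a \pi(a\mid s)R(s,a)$ and use a common action for both states when the action is held fixed. The cost of this is an extra term $\|R\|_\infty\,\mathrm{Lip}(\pi)$, which is finite because rewards are bounded and $\pi$ is Lipschitz. This would change only the constant, not the geometric structure.
\end{itemize}

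I will state explicitly that we adopt the closed-loop reading so that the constant is exactly $K_R$.

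\textbf{Step 3 (summation).} Combining the two steps,
\[
|G_T(s)-G_T(\tilde s)| \le \sum_{t=0}^{T-1}\gamma^t K_R K_f^t\|s-\tilde s\| = K_R\sum_{t=0}^{T-1}(\gamma K_f)^t\,\|s-\tilde s\|.
\]
This gives the claimed bound on $\mathrm{Lip}(G_T)$. For $\gamma K_f\neq 1$, the finite geometric sum equals $\frac{(\gamma K_f)^T-1}{\gamma K_f-1}$.

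\textbf{Exponential growth.} If $\gamma K_f>1$, the bound is at least $(\gamma K_f)^{T-1}K_R$, so it grows exponentially in $T$. Strictly, this shows that the upper bound grows exponentially, which is what the statement asserts of the constant. Showing that the true Lipschitz constant also grows exponentially would require a matching lower bound, e.g.\ from expansion along an unstable direction that the reward is sensitive to. That lies beyond the stated theorem, and I would only remark on it.
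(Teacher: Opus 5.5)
Your proof is correct and matches the paper's argument: induction on Assumption~\ref{assumption:pathwise_divergence} gives $\|s_t-\tilde s_t\|\le K_f^t\|s_0-\tilde s_0\|$, the triangle inequality is applied to the discounted reward sum, and the result is summed as a geometric series. The paper's appendix uses $|R(s_t,\pi(s_t))-R(\tilde s_t,\pi(\tilde s_t))|\le K_R\|s_t-\tilde s_t\|$ without comment, and its restatement even says ``for any fixed action sequence'', which would justify the reward step but sits awkwardly with the closed-loop Assumption~\ref{assumption:pathwise_divergence}; so your two caveats are the right ones, namely reading $K_R$ as the closed-loop reward constant and noting that only the upper bound is shown to grow exponentially.
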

\vspace{-8pt}
In the theorem above we show that one-step samples under chaotic dynamics can explode.

\begin{theorem}[$W_1$-smoothness of the return distribution under statistical stability]
\label{theorem:smoothness_wasserstein}
    Under Assumptions \ref{assumption:reward_regularity} - \ref{assumption:discounting}, for any fixed initial action $a$ the following holds:
    \begin{equation}
        W_1\!\left(Z^\pi(s_1,a),Z^\pi(s_2,a)\right)\le \frac{K_R}{1-\gamma K_P}\,\|s_1-s_2\|.
    \end{equation}
    Proof in Appendix \ref{sec:appendix2}.
\end{theorem}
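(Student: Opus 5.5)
Following Assumption~\ref{assumption:discounting}, the plan is a fixed-point argument for the distributional Bellman operator $\mathcal T^\pi$ on state-indexed return distributions, using a weighted Lipschitz seminorm in state. Let $\mathcal Z_L$ be the set of return-distribution functions $\eta:(s,a)\mapsto \eta(s,a)\in\mathcal P_1(\mathbb R)$ satisfying
\begin{equation}
W_1(\eta(s_1,a),\eta(s_2,a))\le L\|s_1-s_2\|
\end{equation}
for all $a$. We show that $\mathcal T^\pi$ maps $\mathcal Z_L$ into itself when $L=K_R/(1-\gamma K_P)$. Since $\mathcal T^\pi$ is a $\gamma$-contraction in the supremum-$W_1$ metric \citep{bellemare_distributional_2017}, iterates from any $\eta_0\in\mathcal Z_L$ (e.g.\ a constant Dirac) converge to $Z^\pi$. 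The Lipschitz bound is closed under pointwise $W_1$ limits, so it passes to $Z^\pi$. Bounded rewards (Assumption~\ref{assumption:reward_regularity}) make all returns bounded, so first moments are finite and $W_1$ is well defined.

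The key one-step estimate is as follows. Fix $a$ and $s_1,s_2$, and write
\begin{equation}
(\mathcal T^\pi\eta)(s_i,a)=\mathrm{Law}\bigl(R(s_i,a)+\gamma X_i\bigr),
\end{equation}
where $X_i$ has the mixture law $\nu_i=\int \eta(s',a')\,\pi(da'\mid s')\,P(ds'\mid s_i,a)$. Translation changes $W_1$ by at most $|R(s_1,a)-R(s_2,a)|$, and scaling by $\gamma$ multiplies it by $\gamma$. This gives
\begin{equation}
W_1\le K_R\|s_1-s_2\|+\gamma\, W_1(\nu_1,\nu_2).
\end{equation}
To bound $W_1(\nu_1,\nu_2)$, take an optimal coupling $(S_1',S_2')$ of $P(\cdot\mid s_1,a)$ and $P(\cdot\mid s_2,a)$. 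Convexity of $W_1$ under mixtures, conditioned on the coupling, gives
\begin{equation}
W_1(\nu_1,\nu_2)\le \mathbb E\,W_1\bigl(\bar\eta(S_1'),\bar\eta(S_2')\bigr),
\end{equation}
where $\bar\eta(s')=\int\eta(s',a')\pi(da'\mid s')$. If $\bar\eta$ is $L$-Lipschitz in $W_1$, then $W_1(\nu_1,\nu_2)\le L\,\mathbb E\|S_1'-S_2'\|=L\,W_1(P(\cdot\mid s_1,a),P(\cdot\mid s_2,a))\le LK_P\|s_1-s_2\|$ by Assumption~\ref{assumption:statistical_stability}. Combining the two displays yields the Lipschitz constant $K_R+\gamma K_P L$, and this equals $L$ exactly when $L=K_R/(1-\gamma K_P)$, which is positive by Assumption~\ref{assumption:discounting}. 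Equivalently, unrolling the recursion from $L_0=0$ gives $L_n=K_R\sum_{k<n}(\gamma K_P)^k$, which increases to the stated bound.

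The main obstacle is the policy-averaging step: showing that $\bar\eta$ is $L$-Lipschitz and not only each $\eta(\cdot,a')$. A state-dependent $\pi(\cdot\mid s')$ adds a cross term proportional to $\mathrm{Lip}(\pi)$ times the diameter of the return range, which would spoil the clean constant. My first approach is to read Assumption~\ref{assumption:statistical_stability} as a statement about the closed-loop kernel $P^\pi(\cdot\mid s,a)$ on next state--action pairs, with a product metric in which actions enter through the state. Then $\bar\eta$ is replaced by $\eta$ evaluated at coupled pairs $(S',A')$, and the same coupling argument goes through verbatim. Failing that, I would restrict to the deterministic closed-loop policy implicit in Assumption~\ref{assumption:pathwise_divergence}, with $a'=\pi(s')$. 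In that case $\bar\eta(s')=\eta(s',\pi(s'))$, and the required Lipschitz property follows once the induction hypothesis is stated for the policy-evaluated map $s\mapsto\eta(s,\pi(s))$ alongside the fixed-action one. I expect the authors' proof to treat this step implicitly in one of these two ways.
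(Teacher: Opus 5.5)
Your core argument is correct and takes a different route from the paper. The obstacle you flag is a genuine gap, and the paper's proof shares it. In fact the statement as written is false, so no proof closes without a stronger hypothesis.

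\paragraph{Comparison of routes.} The paper argues by one-shot self-consistency. It fixes $a$, sets $K:=\sup_{s_1\neq s_2}W_1(Z^\pi(s_1,a),Z^\pi(s_2,a))/\|s_1-s_2\|$, applies the distributional Bellman equation once with an optimal coupling of next states, and rearranges $K\le K_R+\gamma K_P K$. Your invariant-class argument (show $\mathcal T^\pi\mathcal Z_L\subseteq\mathcal Z_L$, iterate from a Dirac, pass to the sup-$W_1$ limit) fixes two defects of that argument:
\begin{itemize}
\item It never needs $K<\infty$ a priori. Without finiteness, the paper's rearrangement is vacuous.
\item Your class is uniform in the action. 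The paper's $K$ is defined for the single action $a$, yet it is then applied to $Z^\pi(S_i',A_i')$ with $A_i'\sim\pi(\cdot\mid S_i')$.
\end{itemize}
Your one-step estimate (translation, $\gamma$-scaling, mixture convexity under an optimal coupling) and the closure of the Lipschitz bound under limits are sound.

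\paragraph{The flagged obstacle.} The paper passes over it by silently treating $Z^\pi(S',A')$ as $Z^\pi(S',a)$. Under Assumptions \ref{assumption:reward_regularity}--\ref{assumption:discounting} with a general $\pi$ the statement fails. Take:
\begin{itemize}
\item $\mathcal S=[0,1]$ and $\mathcal A=\{a_1,a_2\}$;
\item $P(\cdot\mid s,a)=\delta_s$, so $K_P=1$, and $\gamma=1/2$;
\item $R(\cdot,a_1)\equiv1$ and $R(\cdot,a_2)\equiv0$, so $K_R=0$;
\item $\pi(a_1\mid s)=s$.
\end{itemize}
Then $Z^\pi(s,a)=R(s,a)+\sum_{t\ge1}\gamma^tB_t$ with $B_t$ i.i.d.\ Bernoulli$(s)$. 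Hence $\mathbb E\,Z^\pi(s,a)=R(s,a)+s$, so $W_1(Z^\pi(s_1,a),Z^\pi(s_2,a))\ge|s_1-s_2|$, while the claimed bound is $0$.

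\paragraph{What your fixes actually need.} Both must name the strengthening explicitly.
\begin{itemize}
\item \emph{Second fix.} It does not follow merely by stating the induction for $s\mapsto\eta(s,\pi(s))$. Propagating that hypothesis requires $s\mapsto R(s,\pi(s))$ to be $K_R$-Lipschitz and $s\mapsto P(\cdot\mid s,\pi(s))$ to be $K_P$-Lipschitz in $W_1$. The per-action Assumptions \ref{assumption:reward_regularity} and \ref{assumption:statistical_stability} do not give this. With finite $\mathcal A$ on a connected state space, a non-constant deterministic policy is discontinuous. In the example above, choosing $a_1$ iff $s>1/2$ gives $Z^\pi(s,a)=R(s,a)+\mathbf{1}[s>1/2]$. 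Once these closed-loop versions are assumed, your pair of inductions closes at exactly $K_R/(1-\gamma K_P)$.
\item \emph{First fix.} It is likewise not ``verbatim''. Bounding $\mathbb E\,W_1(\eta(S_1',A_1'),\eta(S_2',A_2'))$ by a product-metric distance needs $\eta$ controlled across mismatched actions, which $\mathcal Z_L$ does not provide. Any metric that charges for action mismatch folds the policy-times-return-range cross term into the reinterpreted $K_P$.
\end{itemize}

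\paragraph{Correct versions.} Any of the following works:
\begin{itemize}
\item A state-independent $\pi$: convexity handles $\bar\eta$ and your uniform-in-$a$ argument works unchanged.
\item The closed-loop assumptions above.
\item Your honest constant $(K_R+\gamma K_P\,\ell D)/(1-\gamma K_P)$, where $\ell$ is the total-variation Lipschitz constant of $s\mapsto\pi(\cdot\mid s)$ and $D$ is the length of the return range.
\end{itemize}
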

\vspace{-5pt}

Theorem \ref{theorem:smoothness_wasserstein} establishes that, under Assumption \ref{assumption:statistical_stability}, the true return distribution $Z^\pi(s,a)$ varies Lipschitz continuously with respect to state when measured under $1$-Wasserstein.
We emphasise that the expectation of a Lipschitz distribution is itself Lipschitz.
Thus Theorem \ref{theorem:smoothness_wasserstein} does not imply that expectation based value functions are non-smooth.
Rather, when viewed as mappings into the space of probability measures equipped with $W_1$, return distributions vary in a stable manner across nearby states.
The real advantage of distributional RL is that the distributional optimisation objective has better conditioning than scalar TD errors.
Chaotic systems may exhibit exponential divergence at the trajectory level, but their induced evolution of probability measures can remain stable.
Distributional RL leverages this measure level regularity rather than suppressing trajectory level chaos.
In Appendix \ref{sec:appendix3} we discuss in greater detail how distributional RL still copes when \ref{assumption:statistical_stability} fails.

\begin{remark}
    It is crucial to note that Theorem \ref{theorem:exponential_onestep} and \ref{theorem:smoothness_wasserstein} evaluate the exact same closed-loop system, yet yield vastly different stability guarantees. 
    Theorem \ref{theorem:exponential_onestep} relies on the trajectory expansion rate $K_f$. 
    In chaotic systems, local exponential divergence guarantees that $K_f > 1$, forcing the scalar return to explode.
    Conversely, Theorem \ref{theorem:smoothness_wasserstein} relies on $K_P$, the Lipschitz constant of the transition kernel in the $1$-Wasserstein metric. 
    A defining feature of chaotic strange attractors is that while individual trajectories diverge exponentially ($K_f > 1$), their invariant probability measures are statistically stable. 
    Therefore, it is consistent for a chaotic system to simultaneously exhibit $K_f > 1/\gamma$ (breaking scalar value smoothness) and $K_P < 1/\gamma$ (preserving distributional smoothness). 
\end{remark} 

\begin{remark}
    If the environment is fully observed and deterministic, then Assumption \ref{assumption:statistical_stability} reduces to the dynamics Lipschitz constant, so $K_P = K_f$. 
    In this regime $\gamma K_P < 1$ reduces to $\gamma K_f <1$ which fails in chaotic systems ($K_f >1$) and $W_1$-smoothness does not improve over one-step sensitivity.
    The regime where distributional smoothing becomes meaningful is when the effective transition kernel is non-degenerate which is very common in pragmatic RL. 
    In such cases the kernel can be statistically stable in $W_1$ with $K_P \ll K_f$ even when trajectories exhibit strong local expansion.
\end{remark}

\section{Empirical Analysis}
Above we have given a theoretical proof for the smoothness of the distributional RL objective under chaos.
In the following section we experientially support these claims.
We make a distinction of two types of chaotic environments: (i) settings in which the agent directly influences and stabilises the underlying chaotic system, and (ii) settings in which the chaotic dynamics are intrinsic to the environment and cannot be eliminated, but nevertheless influence the state evolution and observed returns.
We experiment on both types to understand the benefits distributional methods bring. 

\begin{figure}[t]
    \captionsetup{skip=0pt}
    \centering
        \includegraphics[width=1\linewidth]{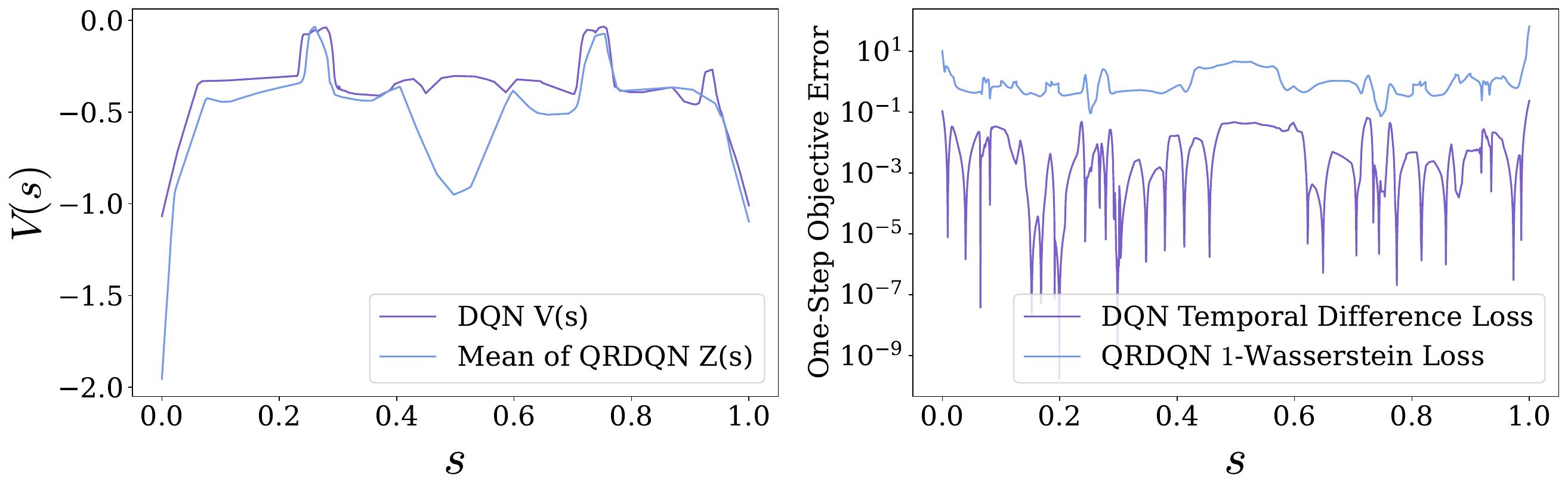}
    \caption{Left: $Q(s,a)$ for Action $1$ for the Logistic Map, we take the mean of the QRDQN value function for a $1$D representation. Right: Comparison of one-step errors between DQN and QRDQN. Figure data collected from agents after $3e8$ training steps.}
    \label{fig:logistic_map_landscape}
    \vspace{-10pt}
\end{figure}

\subsection{Experimental Settings}
\textbf{Environments.} For the former we look to control the Logistic Map, a canonical discrete time $1$D chaotic map.
Following \citet{gadaleta_optimal_1999} the Logistic Map control task is defined as $x_{t+1} = (m+a_t)x_t(1 - x_t)$, where $m=3.8$ is a parameter that adjusts the chaotic nature of the equation, $a\in[-0.1, 0.1]$ that perturbs the chaotic parameter, with arbitrary distance metric reward $r=\|x_t - x_{fp}\|^2$ to incentive an agent to stabilise the system to an unstable $1$-periodic fixed point $x_{fp} = \frac{m-1}{m}$. 
We also use the Ikeda Map \citep{ikeda1979multiple}, a discrete time $2$D chaotic map defined as:
\begin{align}
    x_{t+1} &= 1 + (u + a_t)(x_t \cos \xi_t - y_t \sin \xi_t) \\
    y_{t+1} &= (u + a_t)(x_t \sin \xi_t + y_t \cos \xi_t), 
\end{align}
where $\xi_t = k - \frac{p}{1 + x_t^2 + y^2_t}$, $u$ an equation parameter (not control term), $a \in [-0.1, 0.1]$, the $2$D state vector $s_t = [x_t, y_t]^\intercal \in \mathbb R^2$, with arbitrary distance metric reward $r=\|s_t - s_{fp}\|^2_2$ that incentives an agent to stabilise the system to a $1$-periodic fixed point $s_{fp} = [0.533, 0.247]$.
Chaotic behaviour is typical when $u=0.9$, $k=0.4$, and $p=6.0$.

For the latter we use two continuous time chaotic flow based environments in which a ``swimmer" must navigate towards a desired area of the state.
In a similar manner to \citet{bellemare2020autonomous} the swimmer is under-actuated, unable to move directly toward the goal, and thus must \textit{understand and work with the chaos}.
For a $2$D scenario we use the canonical Double Gyre flow, that simulates simplified ocean currents, and a swimmer therein \citep{sulalitha2017quantifying}, defined as:
\begin{align}
    \dot x &= - \pi A \sin (\pi f(x,t)) \cos (\pi y) \\
    \dot y &= \pi A \cos (\pi f(x,t)) \sin (\pi y) \frac{\partial f}{\partial x},
\end{align}
where $f(x,t)= a(t) x^2 + b(t)x$, $a(t) = \zeta \sin (\omega t)$, $b(t) = 1 - 2 \zeta \sin(\omega t)$, $x\in[0, 2], y\in[0,1]$, $A$ the flow magnitude, $\omega$ adjusts oscillation frequency, and $\zeta$ adjusts oscillation amplitude in $x$.
Setting $\zeta = 0$ creates a time homogeneous system that is time inhomogeneous otherwise.
We also use the $3$D Arnold–Beltrami–Childress (ABC) flow \citep{qin2023kind}, defined as:
\begin{align}
    \dot x &= A \sin z + C \cos y \\
    \dot y &= B \sin x + A \cos z \\
    \dot z &= C \sin y + B \cos x.
\end{align}
For both environments we set a reward of $r(s) = -0.01 + 10 \cdot \exp \left(-d(s)^2 /2 \epsilon ^2 \right)$, where $d(s) = \|s - s_\text{goal}\|$, to give a smooth Gaussian peak within $\epsilon$ of the goal state to ensure Assumption \ref{assumption:reward_regularity}.

\textbf{Algorithms.} For our distributional RL algorithm we chose QRDQN \citep{dabney2018distributional} that uses a quantile regression loss which is equivalent to minimising the 1-Wasserstein distance \citep{dabney2018distributional}.
The non-distributional method of choice was DQN \citep{mnih2013playing}.

\textbf{Description of the Experiments.} We primarily focus on plotting the one-step error between distributional and non-distributional methods to compare smoothness in state.
To explicitly evaluate optimisation conditioning, we calculate the $L_2$ gradient norm ($||\nabla_\theta \mathcal{L}||_2$) and the relative local gradient variance across the state space for the $1$D Logistic Map.

\begin{figure}[t]
    \captionsetup{skip=0pt}
    \centering
        \includegraphics[width=1\linewidth]{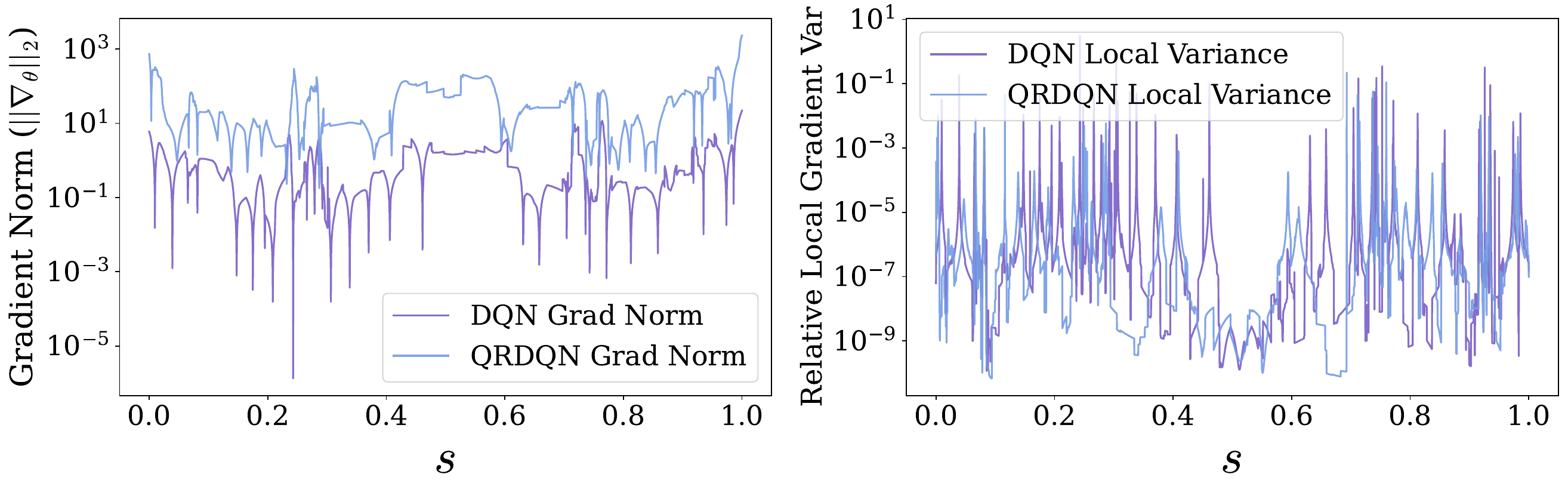}
    \caption{Left: Gradient norms ($||\nabla_\theta||_2$) of DQN and QRDQN for the Logistic Map. Right: Local gradient variance of DQN and QRDQN. Figure data collected from agents after $3e8$ training steps.}
    \label{fig:logistic_map_grad_landscape}
    \vspace{-10pt}
\end{figure}

\subsection{Results}
In Figure \ref{fig:logistic_map_landscape} we compare the landscapes of the value function and one-step error between DQN and QRDQN.
On the left it is clear that both learn a similar representation of the environment, where we have flattened the QRDQN distribution by taking the mean over the quantiles.
On the right houses some important results, the DQN temporal difference loss exhibits high variance and frequent sharp spikes, whereas the QRDQN $1$-Wasserstein loss remains comparatively smoother and more stable across states: suggesting that the distributional objective provides an improved optimisation signal.
In Appendix \ref{sec:appendix4} we show the full QRDQN return distribution for the Logistic Map.
The downstream impacts of this better behaved optimisation signal are seen in Figure \ref{fig:logistic_map_grad_landscape}.
On the right, the local gradient variance remains complex for both DQN and QRDQN; the value functions are inherently non-smooth and thus the neural network's gradients will always show localised variance spikes when trying to fit these high-frequencies \citep{wang_fractal_2023}.
However, on the left the gradient norm for QRDQN is bounded and maintains a consistent, non-vanishing range.
QRDQN gradient norms are naturally larger since the neural network outputs $N$ distinct quantiles rather than a singular scalar, but what is important is that the range of the norms is bounded. 
DQN gradient norms fluctuate over a larger range, even reaching a minimal value of $\sim 10^{-7}$. 

In further support, we display the objective for the Double Gyre flow in Figure \ref{fig:double_gyre_landscape}.
Interestingly in Figure \ref{fig:double_gyre_landscape} one can see the goal state in the top right corner for both algorithms, but for QRDQN the one-step loss is much smoother over the state space with a much reduced error range.  
Since most trajectories accumulate similar step penalties before either reaching the goal or failing, the induced return distributions differ only minimally across nearby states. 
The Wasserstein objective correctly captures this measure level similarity, yielding a smooth and well conditioned landscape. 
In contrast, the scalar TD loss of DQN exhibits substantial variability due to bootstrap amplification of local estimation noise. 
Similar plots for the Ikeda Map can be found at Figure \ref{fig:ikeda_map_landscape} in Appendix \ref{sec:appendix4}.

\begin{figure}[t]
    \captionsetup{skip=0pt}
    \centering
        \includegraphics[width=1\linewidth]{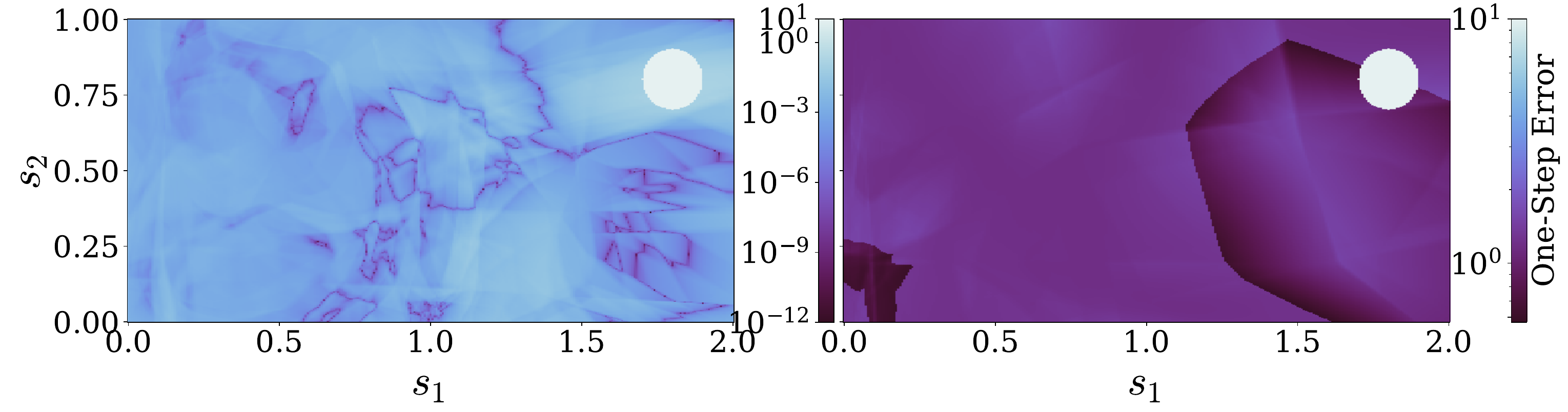}
    \caption{Double Gyre Flow objective landscapes. Left: DQN, Right: QRDQN. Maximal error has been clipped from $10^3$ to $10^1$ for clarity. Figure data collected from agents after $3e8$ training steps.}
    \label{fig:double_gyre_landscape}
    \vspace{-10pt}
\end{figure}

\subsection{Discussion}
While both DQN and QRDQN approximate value functions over chaotic dynamics, the optimisation geometry differs markedly. 
The fractal structure of the value function is unavoidable, as it reflects the intrinsic properties of the underlying chaotic dynamical system, and thus gradient variance is naturally also non-smooth. 
In Theorem \ref{theorem:exponential_onestep} we proved that for scalar returns in chaotic regimes the Lipschitz constant grows exponentially; the downstream consequence of which can be vanishing/exploding gradients.
Further, Theorem \ref{theorem:smoothness_wasserstein} proved that under statistical stability, the return distribution is $W_1$-Lipschitz continuous with a strict, finite bound. 
Figure \ref{fig:logistic_map_grad_landscape} corroborates these insights: DQN suffers from gradient collapse, QRDQN maintains a bounded non-vanishing gradient norm.
We emphasise that a smoother objective does not guarantee improved asymptotic performance; instead it improves conditioning of gradient updates and mitigates the impact of extreme bootstrap targets. 
\section{Benchmark Evaluation}
We have shown theoretically and empirically the smoother objective landscape that distributional RL provides in chaotic systems.
To finally exemplify our hypothesis, we compare distributional and non-distributional methods on the four aforementioned canonical chaotic control environments.

\subsection {Experimental Settings} 
We track episodic return over training time, with each environment having a maximum number of steps. 
In the Logistic and Ikeda Map environments, episodes may also terminate once the system stabilises within a specified error of the fixed point.
For the Double Gyre and ABC Flow environments, episodes terminate when the agent reaches the goal.
Algorithms are evaluated using six random seeds, and we plot mean episodic return across runs with $10\%-90\%$ quantile range.
Further, we include PPO \citep{schulman2017proximal} as a policy gradient comparison.
We understand these are fairly simple environments in state space size, however show that RL methods still struggle here.
All experiments are run on an Nvidia RTX $4090$ with each run taking $\approx 6$ hours, requiring $\approx 20$GB of VRAM.

\subsection{Results}

Figure \ref{fig:benchmark_results} compares DQN, QRDQN, and PPO across the four environments. 
In the first two environments (Logistic Map and Ikeda Map), the dynamics become less chaotic over time as they are effectively stabilised through the learning process. 
As a result, return distributions narrow and the advantage of modelling the full return distribution, as done in distributional RL, becomes less pronounced. 
We can see this in our results as QRDQN shows only marginal or inconsistent improvements over DQN. 
In contrast, in the latter environments (Double Gyre Flow and ABC Flow), strong chaotic behaviour persists throughout training. 
For the Double Gyre flow, QRDQN has an unstable early training period but eventually settles upon an optimal policy with a low variance between seeds.
For the ABC flow, QRDQN eventually learns an improved policy but is unable to reach the same maximal return as PPO.
In these settings, the ability of QRDQN to capture and learn from the full distribution of possible outcomes provides a clearer advantage, leading to more stable and superior performance. 
In all experiments, PPO is much more sample efficient reaching convergence in $~1/5$ of the time it takes the other two.
Further, the results from PPO set an upper bound for the maximal episodic return in all experiments but the Logistic Map.

\begin{figure}[t]
\captionsetup{skip=0pt}
    \centering
        \includegraphics[width=1\linewidth]{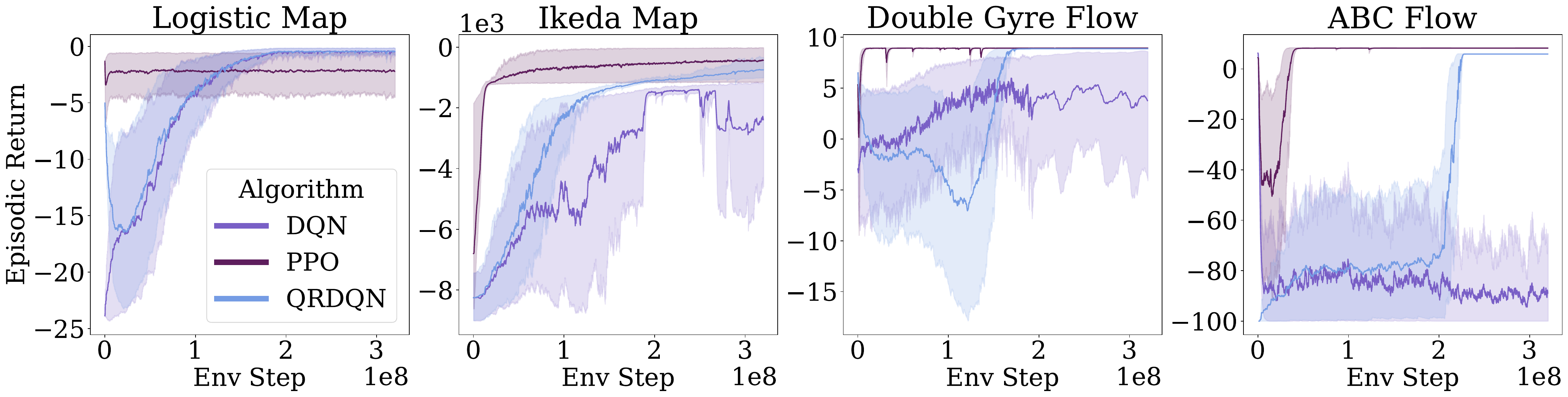}
    \caption{
    Each algorithm is run over six random seeds, we plot the mean episodic return across runs with shaded regions indicating the $10\%-90\%$ quantile range at each environment step.
    }
    \label{fig:benchmark_results}
    \vspace{-1pt}
\end{figure}

\subsection{Discussion}
Although in the Logistic and Ikeda Map the agent progressively suppresses chaotic behaviour by stabilising the system to a fixed point, even in the Double Gyre and ABC flow learning progressively regularises the experienced dynamics.
As policies become more consistent, the state visitation distribution concentrates on narrower regions of the attractor. 
In this sense, learning induces a form of ordering even when the underlying flow remains chaotic. 
However, return stochasticity persists due to sensitivity to initial conditions, noise, and perturbations. 
Distributional methods remain useful because they capture this residual multi-modality instead of collapsing outcomes into an expectation.

From Figure \ref{fig:benchmark_results} it is clear that PPO performs competitively across tasks. 
This aligns with the analysis of \citet{wang_mollification_2024} where Gaussian policies act as implicit mollifiers of fractal value landscapes. 
For Q-learning approaches the inherent challenge is that the policy is typically a step function, whereas in PPO we have a Lipschitz continuous policy.
Policy gradients smooth via action stochasticity, while distributional RL smooths via probability geometry (e.g. Wasserstein distance). 
These are distinct but related mechanisms, reinforcing the importance of optimisation smoothing in chaotic systems. 

In persistent chaos environments (Double Gyre and ABC Flow), QRDQN eventually reaches a similar episodic return as PPO but requires many more samples. 
This suggests distributional RL primarily improves optimisation stability (e.g. stabilised targets, reduced gradient variance) but has no effect on data efficiency. 
We therefore frame its benefit as improved conditioning and robustness in highly sensitive regimes for Q-learning based methods, but at least in the experiments we have used, the mollification effects of PPO are more impactful for sample efficiency and maximal return.

\section{Conclusion}
\textbf{Summary of the Contributions.} In this work we investigated how chaotic dynamics impact Q-learning based RL objectives. 
While chaos causes exponential trajectory sensitivity and non-smooth value landscapes, we show that return distributions can remain Lipschitz in Wasserstein distance under mild statistical stability assumptions. 
This measure level regularity helps explain why distributional RL produces better conditioned optimisation in chaotic environments.
Empirically, distributional methods reduce loss variability and stabilise training. 
Gains are modest when chaos is suppressed, but clearer when strong mixing persists. 
We do not claim universal superiority rather, distributional RL offers robustness in settings with unstable bootstrap targets for Q-learning based methodologies.

\textbf{Limitations.} Our theory assumes statistical stability in Wasserstein distance, which may fail near bifurcations or in perfectly deterministic systems. 
Although distributional bounds may break down, modelling multi-modal returns can still outperform averaging. 
Our experiments focus on discrete actions; extending the analysis to continuous control remains important. 
Finally, our results emphasise optimisation stability rather than asymptotic optimality or sample efficiency improvements.


\textbf{Future Work.} Future directions include extending theory to continuous actions or studying representation learning with attractor geometry.
For example, using distributional methods with policy gradient approaches would indicate whether the benefits of learning a distribution carry on over, perhaps the combination of mollification and learning a return distribution would be an optimal methodology for chaotic systems.
Distributional RL and policy gradient methods apply smoothing, but still learn a fractal value function, can one avoid this tough representation task?
Finally, recent work suggests alternative measures that better characterise chaotic dynamics \citep{finzi2026entropy}.

\begin{ack}
James Rudd-Jones is supported by grants from the UK EPSRC-DTP (Award 2868483)
\end{ack}

\bibliography{sample}
\bibliographystyle{plainnat}


\appendix



\section{Experimental Code}
Code for all the experiments will be released upon acceptance.
The chaotic control environments are part of a github package that will be linked here upon acceptance to ensure double blind is respected.
We'd like to thank \citet{thyng2016true} for creating \texttt{cmocean}, a great source of colour maps used throughout this paper!

\section{Proof of Theorem \ref{theorem:exponential_onestep}}
\label{sec:appendix1}

\begin{theorem*}[Exponential one-step sensitivity of scalar sample returns]
    Under Assumptions \ref{assumption:pathwise_divergence} and \ref{assumption:reward_regularity}, for any fixed action sequence, the truncated return map $s\mapsto G_T(s)$ is Lipschitz ($\mathrm{Lip}(G_T)$) with constant
    \begin{equation}
        \mathrm{Lip}(G_T)\;\le\;K_R\sum_{t=0}^{T-1}(\gamma K_f)^t
        \;=\;
        K_R\cdot\frac{(\gamma K_f)^T-1}{\gamma K_f-1}\quad (\gamma K_f\neq 1).
    \end{equation}
    In particular, if $\gamma K_f>1$, then $\mathrm{Lip}(G_T)$ grows exponentially in $T$.
\end{theorem*}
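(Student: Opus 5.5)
We follow the two trajectories forward in time, bound how far apart they drift at each step, and then sum the reward differences with discounting. Fix $s_1,s_2$ and the action sequence, or equivalently the closed-loop policy $\pi$ of Assumption \ref{assumption:pathwise_divergence}. Let $s_t^{(1)}, s_t^{(2)}$ be the trajectories generated by $f$ from $s_0^{(i)}=s_i$. Write the truncated return as $G_T(s)=\sum_{t=0}^{T-1}\gamma^t R(s_t,a_t)$, with $a_t$ the action applied at step $t$. Both trajectories receive the same $a_t$, which is either the fixed action or, in the closed-loop reading, absorbed into $f$ and $R$.

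\textbf{Step 1 (state divergence).} By induction using Assumption \ref{assumption:pathwise_divergence}, we show
\begin{equation*}
\|s_t^{(1)}-s_t^{(2)}\|\le K_f^t\|s_1-s_2\|.
\end{equation*}
The base case $t=0$ is trivial. For the inductive step, apply the one-step bound $\|f(s^{(1)}_t,\cdot)-f(s^{(2)}_t,\cdot)\|\le K_f\|s^{(1)}_t-s^{(2)}_t\|$.

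\textbf{Step 2 (reward differences).} By Assumption \ref{assumption:reward_regularity} applied with the common action $a_t$,
\begin{equation*}
|R(s_t^{(1)},a_t)-R(s_t^{(2)},a_t)|\le K_R K_f^t\|s_1-s_2\|.
\end{equation*}

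\textbf{Step 3 (summation).} By the triangle inequality,
\begin{equation*}
|G_T(s_1)-G_T(s_2)|\le\sum_{t=0}^{T-1}\gamma^t K_R K_f^t\|s_1-s_2\|.
\end{equation*}
This is exactly the Lipschitz bound stated in the theorem. The closed form follows from the finite geometric sum whenever $\gamma K_f\neq1$.

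\textbf{Step 4 (exponential growth).} If $\gamma K_f>1$, the bound equals $K_R\frac{(\gamma K_f)^T-1}{\gamma K_f-1}=\Theta((\gamma K_f)^T)$, so it grows exponentially in $T$.

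The only real subtlety is the interface between the policy and the reward, which is the point where this argument could break. In the closed-loop reading, the actions on the two trajectories may differ, since $a_t^{(i)}=\pi(s_t^{(i)})$. Assumption \ref{assumption:reward_regularity} only controls changes in state at a fixed action. For the fixed-action-sequence version stated here this issue does not arise. In the closed-loop case I would treat $R(s,\pi(s))$ as the relevant reward and assume it is $K_R$-Lipschitz, in the same way Assumption \ref{assumption:pathwise_divergence} treats $f(s,\pi(s))$ as a single map. Alternatively, one could add a term involving the Lipschitz constant of $\pi$, which would only change the constant and not the exponential rate.

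Finally, note that the statement is an upper bound. The phrase ``grows exponentially'' should be read as a statement about the bound itself. Showing that the true Lipschitz constant also grows exponentially would require a matching lower bound, for example from local expansion along an unstable direction together with a reward gradient aligned with it. I would mention this explicitly rather than try to prove it.
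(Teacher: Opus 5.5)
Your proposal is correct and is essentially the paper's proof: induction gives $\|s_t-\tilde s_t\|\le K_f^t\|s_0-\tilde s_0\|$, reward regularity is applied term by term, and the discounted geometric sum gives the bound. Your caveat about the closed-loop reward is well founded, because the paper's proof writes $|R(s_t,\pi(s_t))-R(\tilde s_t,\pi(\tilde s_t))|\le K_R\|s_t-\tilde s_t\|$ without comment and so tacitly assumes the same $K_R$-Lipschitzness of $s\mapsto R(s,\pi(s))$ that you propose to state explicitly; your remark that only the upper bound is shown to grow exponentially is also accurate.
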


\begin{proof}
    Let $s_t$ and $\tilde s_t$ be trajectories that start from $s_0$ and $\tilde s_0$ under the same closed-loop policy $\pi$.
    By the closed-loop dynamics Lipschitz property, $\|s_t-\tilde s_t\|\le K_f^t\|s_0-\tilde s_0\|$ by induction.
    Then:
    \begin{equation}
        |G_T(s_0)-G_T(\tilde s_0)|
    \le
    \sum_{t=0}^{T-1}\gamma^t |R(s_t,\pi(s_t))-R(\tilde s_t, \pi(\tilde s_t))|
    \le
    \sum_{t=0}^{T-1}\gamma^t K_R\|s_t-\tilde s_t\|
    \le
    K_R\sum_{t=0}^{T-1}(\gamma K_f)^t\|s_0-\tilde s_0\|.
    \end{equation}
\end{proof}


\section{Proof of Theorem \ref{theorem:smoothness_wasserstein}}
\label{sec:appendix2}

\begin{theorem*}[$W_1$-smoothness of the return distribution under statistical stability]
    Under Assumptions \ref{assumption:reward_regularity} - \ref{assumption:discounting}, for any fixed initial action $a$ the following holds:
    \begin{equation}
        W_1\!\left(Z^\pi(s_1,a),Z^\pi(s_2,a)\right)\le \frac{K_R}{1-\gamma K_P}\,\|s_1-s_2\|.
    \end{equation}
\end{theorem*}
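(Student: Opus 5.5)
The plan is to prove the bound by a fixed-point argument on the space of return-distribution functions with a state-Lipschitz seminorm. I would work with the Kantorovich--Rubinstein dual form
\[
W_1(\mu,\nu)=\sup_{\mathrm{Lip}(h)\le 1}\Big|\int h\,d\mu-\int h\,d\nu\Big|
\]
throughout, together with two standard facts. The first is that $W_1$ is jointly convex under mixtures. The second is that for an affine map $x\mapsto r+\gamma x$ one has $W_1\big((r_1+\gamma X)_\#,(r_2+\gamma Y)_\#\big)\le |r_1-r_2|+\gamma W_1(X,Y)$. Define the Lipschitz modulus
\[
L(Z):=\sup_{a}\sup_{s_1\neq s_2} W_1\big(Z(s_1,a),Z(s_2,a)\big)/\|s_1-s_2\|.
\]
The goal is to show that the distributional Bellman operator $\mathcal T^\pi$ satisfies
\[
L(\mathcal T^\pi Z)\le K_R+\gamma K_P\,L(Z).
\]
Iterating from $Z_0=\delta_0$, which has $L=0$, would then give $L(Z_k)\le K_R\sum_{j<k}(\gamma K_P)^j$. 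Since $Z_k\to Z^\pi$ in the supremum-$W_1$ metric (the contraction of \citet{bellemare_distributional_2017}, using bounded rewards from Assumption \ref{assumption:reward_regularity}), and pointwise $W_1$-limits preserve the inequality, we would obtain $L(Z^\pi)\le K_R/(1-\gamma K_P)$ under Assumption \ref{assumption:discounting}.

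For the one-step inequality, fix $a$ and $s_1,s_2$. First I would write $(\mathcal T^\pi Z)(s_i,a)$ as the law of $R(s_i,a)+\gamma Z(S',A')$ with $S'\sim P(\cdot|s_i,a)$. Applying the affine fact to the mixtures gives a reward term $|R(s_1,a)-R(s_2,a)|\le K_R\|s_1-s_2\|$, plus $\gamma$ times the $W_1$ distance between the next-state mixtures
\[
\eta_i=\int \eta_Z(s')\,P(ds'|s_i,a),\qquad \eta_Z(s'):=\int Z(s',a')\,\pi(da'|s').
\]
Next I would bound $W_1(\eta_1,\eta_2)$. For any 1-Lipschitz test function $h$, set $g(s'):=\int h\,d\eta_Z(s')$. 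Then $\int h\,d\eta_i=\int g\,dP(\cdot|s_i,a)$, so if $g$ is $L'$-Lipschitz, the dual form and Assumption \ref{assumption:statistical_stability} give $W_1(\eta_1,\eta_2)\le L' K_P\|s_1-s_2\|$.

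The main obstacle is showing $L'\le L(Z)$. This requires that $s'\mapsto\eta_Z(s')$ be $L(Z)$-Lipschitz in $W_1$, which involves the policy mixture over $a'$. Three routes are possible:
\begin{itemize}
\item \emph{Treat the action as fixed in the recursion.} Read $K_P$ as the Lipschitz constant of the closed-loop state kernel $s\mapsto$ law of $(S',A')$, or equivalently assume the policy dependence is absorbed into $P$. This matches the paper's use of the closed-loop map in Assumption \ref{assumption:pathwise_divergence}.
\item \emph{Use a deterministic Lipschitz policy.} Absorb it into $P$, so that $\eta_Z(s')=Z(s',\pi(s'))$.
\item \emph{Define $L$ on the state-value distribution instead.} Take $\eta_Z(s)$ directly and prove the recursion for that quantity, recovering the action-conditioned claim at the final step.
\end{itemize}
I would adopt the third route, since it makes the recursion close cleanly: $L_\eta(\mathcal T Z)\le K_R+\gamma K_P L_\eta(Z)$, with $K_R$ applied action-wise, and the fixed-action statement then follows from one more application of the same one-step bound.
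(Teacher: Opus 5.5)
\paragraph{The gap.} It sits in the step you say closes cleanly, $L_\eta(\mathcal T^\pi Z)\le K_R+\gamma K_P L_\eta(Z)$. Since $\eta_{\mathcal T^\pi Z}(s_i)=\int(\mathcal T^\pi Z)(s_i,a)\,\pi(da\mid s_i)$, you are comparing two mixtures with \emph{different} weights $\pi(\cdot\mid s_1)\neq\pi(\cdot\mid s_2)$ whenever the policy depends on the state. Applying $K_R$ action-wise only controls $W_1\big((\mathcal T^\pi Z)(s_1,a),(\mathcal T^\pi Z)(s_2,a)\big)$ for a common $a$. Joint convexity of $W_1$ passes this bound to the mixtures only when the weights agree.

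If you couple $A_1\sim\pi(\cdot\mid s_1)$ and $A_2\sim\pi(\cdot\mid s_2)$ maximally, then on $\{A_1\neq A_2\}$ you pay up to the full return range. Write $|R|\le R_{\max}$, and let $\mathrm{Lip}_{\mathrm{TV}}(\pi)$ be the Lipschitz constant of $s\mapsto\pi(\cdot\mid s)$ in total variation. The recursion you can actually prove is then
\[
L_\eta(\mathcal T^\pi Z)\le K_R+\gamma K_P L_\eta(Z)+\frac{2R_{\max}}{1-\gamma}\,\mathrm{Lip}_{\mathrm{TV}}(\pi),
\]
and nothing in Assumptions \ref{assumption:reward_regularity}--\ref{assumption:discounting} controls the last term.

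This is not an artefact of your route. With Assumption \ref{assumption:statistical_stability} read as the per-action condition it is, the stated bound is false for state-dependent policies. Take $\mathcal S=\mathbb R$, $\mathcal A=\{0,1\}$, $P(\cdot\mid s,a)=\delta_s$ (so $K_P=1$ and $\gamma K_P<1$), $R(s,a)=c\,a$ (so $K_R=0$), and $\pi(1\mid s)=\sigma(s)$, the logistic sigmoid. The claimed right-hand side is $0$. Yet $\mathbb E\,Z^\pi(s,a)=ca+\gamma c\,\sigma(s)/(1-\gamma)$, hence $W_1\big(Z^\pi(s_1,a),Z^\pi(s_2,a)\big)\ge\gamma c\,|\sigma(s_1)-\sigma(s_2)|/(1-\gamma)>0$. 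Scaling $c$ then defeats any constant built from $K_R,K_P,\gamma$ alone.

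\paragraph{What survives.} Quite a lot. For a state-independent policy, your route 3 goes through verbatim, and the final one-step application gives exactly $K_R+\gamma K_P\cdot K_R/(1-\gamma K_P)=K_R/(1-\gamma K_P)$.

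Route 2 either reduces to this case or needs more than the assumptions give. With the paper's finite action set, a continuous deterministic policy is constant on a connected state space. With continuous actions, you would need $R$ and $P$ to be Lipschitz in the action, which is not assumed.

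Route 1 is also a legitimate repair, but it changes the constant. There $K_P$ becomes the $W_1$-Lipschitz constant of the closed-loop kernel $s\mapsto\mathrm{law}(S',A')$ under some metric on $\mathcal S\times\mathcal A$, with $R$ Lipschitz in that metric too. That constant absorbs the policy's variation and is not the one in Assumption \ref{assumption:statistical_stability}.

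\paragraph{Comparison with the paper.} The paper argues with a self-referential supremum $K$ for the fixed $a$ and an optimal coupling of $(S'_1,S'_2)$, and it has the same hole. It bounds $W_1\big(Z^\pi(S'_1,A'_1),Z^\pi(S'_2,A'_2)\big)\le K\,\mathbb E\|S'_1-S'_2\|$ using only that $Z^\pi(\cdot,a)$ is $K$-Lipschitz. This silently treats the resampled actions $A'_i\sim\pi(\cdot\mid S'_i)$ as equal to $a$.

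The paper also solves $K\le K_R+\gamma K_P K$ for $K$, which presupposes $K<\infty$. Your iteration from $Z_0=\delta_0$ with $L(Z_0)=0$, followed by passage to the limit in the supremum-$W_1$ metric, removes that presupposition. Your dual-form bound $W_1(\eta_1,\eta_2)\le L_\eta(Z)\,K_P\|s_1-s_2\|$ for the next-state mixture is also correct. The missing ingredient is therefore a hypothesis on the policy, not a sharper inequality.
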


\begin{proof}
    Let $\mathcal T^\pi$ be the distributional Bellman operator:
    $(\mathcal T^\pi Z)(s,a)\overset{D}{=}R(s,a)+\gamma Z(S',A')$ with $S'\sim P(\cdot\mid s,a)$ and $A'\sim\pi(\cdot\mid S')$.
    Let us fix $a$ and define:
    \begin{equation}
        K := \sup_{s_1 \neq s_2} \frac{W_1(Z^\pi(s_1,a), Z^\pi(s_2,a))} {\|s_1-s_2\|}.
    \end{equation}
    Using standard coupling plus the fact that $W_1$ contracts under $\gamma$-scaling,
    \begin{equation}
        W_1(Z^\pi(s_1,a), Z^\pi(s_2,a)) \le |R(s_1,a) - R(s_2,a)| + \gamma W_1 \left(Z^\pi(S'_1, A'_1), Z^\pi(S'_2, A'_2)\right).
    \end{equation}
    Now couple $S'_1, S'_2$ using an optimal $W_1$ coupling for $P(\cdot \mid s_1, a)$ and $P(\cdot \mid s_2, a)$, so $\mathbb E\|S'_1 - S'_2\| \le K_P\|s_1 - s_2\|$.
    Conditioning on $(S'_1, S'_2)$ and using that $Z^\pi(\cdot, a)$ is $K$-Lipschitz with respect to $W_1$ (by the definition of $K$), the mixture stability property of $W_1$ implies:
    \begin{equation}
        W_1 \left(Z^\pi(S'_1, A'_1), Z^\pi(S'_2, A'_2)\right) \le \mathbb E \big[W_1(Z^\pi(S'_1, A'_1),Z^\pi(S'_2, A'_2))\big] \le K \mathbb E\|S'_1 - S'_2\|.
    \end{equation}
    Here we use the standard inequality that $W_1$ between mixtures is bounded by the expectation of $W_1$ under a coupling of the mixing variables.
    Assumption \ref{assumption:reward_regularity} ensures all terms are well defined.
    Therefore,
    \begin{equation}
         W_1(Z^\pi(s_1, a), Z^\pi(s_2, a)) \le K_R\|s_1 - s_2\| + \gamma KK_P\|s_1 - s_2\|.
    \end{equation}
    Divide by $\|s_1 - s_2\|$ and take the Supremum:
    \begin{equation}
        K \le K_R + \gamma K K_P,
    \end{equation}
    hence $K\le \frac{K_R}{1 - \gamma K_P}$ when $\gamma K_P<1$.
    Thus concluding the proof.

\end{proof}

\section{When Statistical Stability Fails}
\label{sec:appendix3}

We now consider the regime in which Assumption \ref{assumption:statistical_stability} fails, and clarify how distributional RL behaves in this setting in more detail.

\textbf{Failure of Statistical Stability.} Assumption \ref{assumption:statistical_stability} requires that the transition kernel varies Lipschitz continuously in $W_1$ with respect to the state. 
This can fail at bifurcation points or near basin boundaries.
Consider a system with two disjoint attractors separated by a stable manifold.
Two arbitrarily close states on opposite sides of the boundary evolve toward
different attractors:
\begin{equation}
    P(\cdot \mid s_1,a) = \delta_{\mathcal A_L}, \quad P(\cdot \mid s_2,a) = \delta_{\mathcal A_R},
\end{equation}
where $\delta_{\mathcal A_R}$ defines a Dirac measure of the "right" attractor, and conversely $\delta_{\mathcal A_R}$ a "left" attractor.
Then
\begin{equation}
    W_1\!\left(P(\cdot \mid s_1,a), P(\cdot \mid s_2,a)\right) = d(\mathcal A_L,\mathcal A_R),
\end{equation}
which stays bounded away from zero and does not reduce even as $\|s_1 - s_2\| \to 0$, where $d(\mathcal A_L,\mathcal A_R)$ denotes the distance between the two attractors in the underlying state space metric.
In this case $K_P$ is unbounded and Theorem \ref{theorem:smoothness_wasserstein} no longer applies.
Such behaviour occurs at tipping points, bifurcations, or in perfectly deterministic systems without noise.

\textbf{Expectation Based Learning Near Basin Boundaries.} When basin switching occurs, the scalar value function:
\begin{equation}
    V^\pi(s) = \mathbb{E}[Z^\pi \mid s_0 = s],
\end{equation}
can behave like a step function across the boundary. 
Small perturbations in state produce large jumps in the bootstrapped target, leading to large TD errors, unstable gradients, high sensitivity to initialisation and step size.
Moreover, expectation collapses multimodal outcomes.
If one branch yields $+100$ and the other $-100$, the learned value is $0$, which may correspond to no realisable trajectory of the system.

\textbf{How Distributional RL Fares.} Even when Assumption \ref{assumption:statistical_stability} fails, distributional RL still preserves multimodality. 
Near a separation point the return distribution is naturally bimodal, and distributional methods represent both modes rather than collapsing them into a single mean.
Further, the distributional Bellman operator involves only translation by $R(s,a)$ and scaling by $\gamma$:
\begin{equation}
    Z(s,a) \stackrel{D}{=} R(s,a) + \gamma Z(S',A').
\end{equation}
Both operations are non-expansive in Wasserstein distance: translation leaves the distance unchanged, while scaling by $\gamma$ contracts it:
\begin{equation}
    W_1(X+c,Y+c)=W_1(X,Y), \quad W_1(\gamma X,\gamma Y)=\gamma W_1(X,Y).
\end{equation}
Thus, discounting remains contractive even when transitions are discontinuous.
There are therefore two important regimes.
If $\gamma K_P < 1$, Theorem \ref{theorem:smoothness_wasserstein} guarantees Lipschitz smoothness of the return distribution in $W_1$.
If statistical stability fails (e.g. at bifurcations), this guarantee disappears. 
However, distributional RL still avoids expectation collapse and provides a better conditioned loss geometry than scalar TD learning.

\section{Ikeda Map One-Step Objective Figure}
\label{sec:appendix4}

\begin{figure}[htbp]
    \captionsetup{skip=0pt}
    \centering
        \includegraphics[width=1\linewidth]{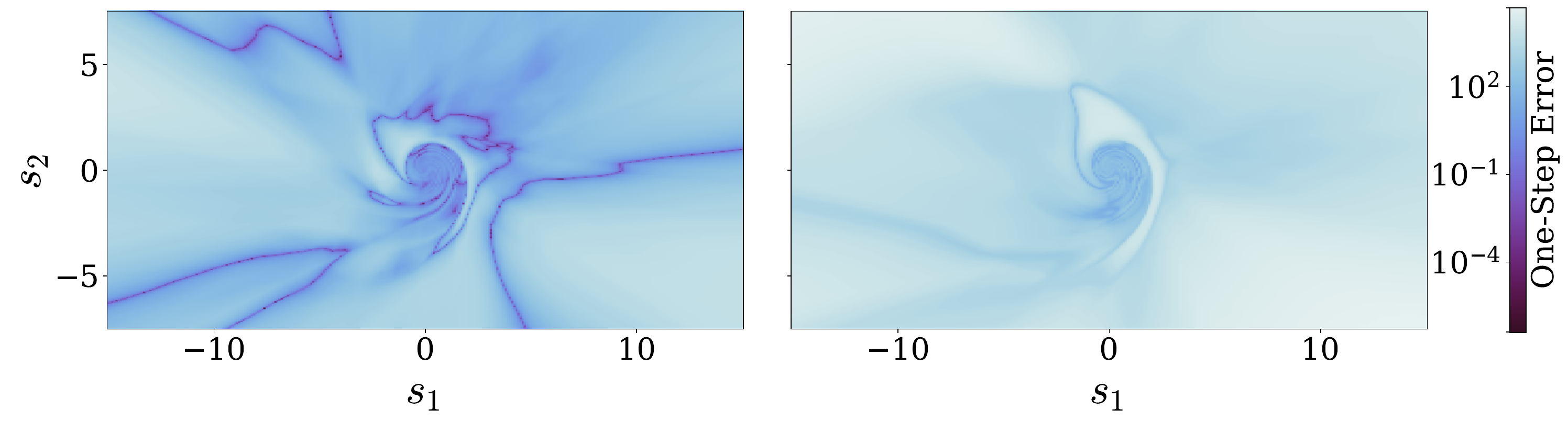}
    \caption{Ikeda Map objective landscapes. Left: DQN, Right: QRDQN. Figure data collected from agents after $3e8$ training steps.}
    \label{fig:ikeda_map_landscape}
    \vspace{-10pt}
\end{figure}

\section{QRDQN Distribution for Logistic Map}
\label{sec:appendix5}

To analyse the structure learned by QRDQN in the $1D$ environment, we visualise the full return distribution as a function of state. 
For a fixed (or greedy) action, the network outputs $N$ quantile locations $\{z_i(s)\}_{i=1}^N$ at quantile fractions $\tau_i = \frac{i+0.5}{N}$, which approximate the inverse cumulative distribution function (CDF) of the return $Z(s)$. 
From these quantile points we construct (i) an empirical CDF surface:
\begin{equation}
    F(z \mid s) \approx \frac{1}{N} \sum_{i=1}^{N} \mathbf{1}[z_i(s) \le z],
\end{equation}
and (ii) an approximate probability density function (PDF) surface obtained by histogramming the quantile locations at each state and normalising to unit mass. 
%
\begin{figure}[htbp]
    \captionsetup{skip=0pt}
    \centering
        \includegraphics[width=1\linewidth]{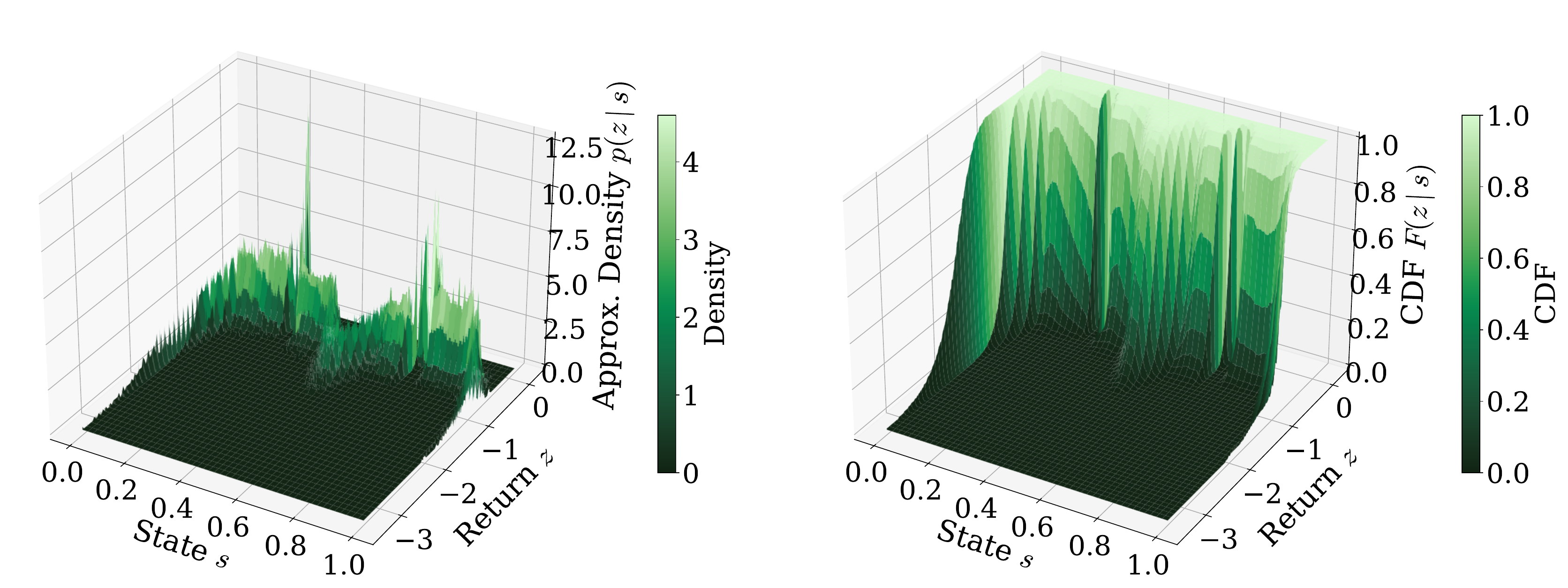}
    \caption{Left: QRDQN implied return PDF surface. Right: QRDQN implied return CDF surface.}
    \label{fig:cdf_pdf}
\end{figure}

Figure \ref{fig:cdf_pdf} visualises the return distribution learned by QRDQN for the Logistic Map as a function of state. 
In the PDF surface, probability mass concentrates in narrow ridges that shift smoothly across $s$, indicating that only a small number of trajectory outcomes dominate for each state while their relative likelihood evolves gradually. 
Regions with multiple peaks correspond to states where trajectories can follow qualitatively different regimes, producing multi-modal returns. 
The CDF surface shows steep transitions where many quantiles collapse around similar return values, revealing discrete bands of likely outcomes. Although these transitions are sharp in the return dimension, their location varies smoothly across state. 
Together, these patterns show that while individual trajectories diverge under chaotic dynamics, the return distribution evolves coherently across state, supporting our claim that modelling the full distribution captures structured variability that expectation based methods would collapse into a single value.

\section{Implementation Details}
\label{sec:implementation_details}

In this section, we outline the exact hyperparameters and network architectures steps used for our experiments to ensure full reproducibility. 
All algorithms were implemented using Jax \citep{bradbury2021jax}.

\subsection{Network Architectures}
\textbf{Value-based methods (DQN, QRDQN):} 
We use the same network for both.
The network consists of five hidden dense layers (feature sizes: $64, 128, 256, 128, 64$).
All hidden layers utilise ReLU activation functions. 
For DQN, the output layer consists of a single linear layer matching the action space $|\mathcal A|$. 
For QRDQN, the output layer dimension is $|\mathcal A| \times N$, where $N$ is the number of quantiles, representing the distribution of returns for each action.

\textbf{Actor-Critic method (PPO):} 
The actor and critic networks share the same hidden layer base consisting of two hidden dense layers (feature sizes: $256, 256$) but branch into separate fully connected heads. 
The policy head outputs a Gaussian distribution over actions, while the value head outputs a single scalar baseline.

\subsection{Hyperparameters}
Table~\ref{tab:hyperparams} summarises the hyperparameters used for DQN, QRDQN, and PPO. 

\begin{table}[h]
\centering
\caption{Hyperparameter configurations for DQN, QRDQN, and PPO.}
\label{tab:hyperparams}
\begin{tabular}{llc}
\toprule
\textbf{Algorithm} & \textbf{Hyperparameter} & \textbf{Value} \\
\midrule
\multirow{8}{*}{\textbf{Common / DQN}}
 & Optimiser & Adam \\
 & Learning rate & $1 \times 10^{-3}$ \\
 & Discount factor ($\gamma$) & $0.99$ \\
 & Replay buffer size & $1 \times 10^7$ \\
 & Batch size & $64$ \\
 & Target network update frequency & $64$ steps \\
 & Initial exploration $\epsilon$ & $1.0$ \\
 & Final exploration $\epsilon$ & $0.01$ \\
 & Exploration $\epsilon$ decay steps & $1 \times 10^6$ \\
\midrule
\multirow{3}{*}{\textbf{QRDQN}}
 & Number of quantiles ($N$) & $201$ \\
 & Huber loss threshold ($\kappa$) & $1.0$ \\
\midrule
\multirow{9}{*}{\textbf{PPO}}
 & Optimiser & Adam \\
 & Learning rate & $3 \times 10^{-4}$ \\
 & Number of parallel environments & $32$ \\
 & Rollout length per environment & $256$ \\
 & PPO clip parameter ($\epsilon$) & $0.2$ \\
 & GAE parameter ($\lambda$) & $0.95$ \\
 & Value loss coefficient & $0.5$ \\
 & Entropy coefficient & $0.01$ \\
 & Optimisation epochs & $4$ \\
\bottomrule
\end{tabular}
\end{table}



\end{document}